\documentclass{article} %

\usepackage{graphicx}
\usepackage{subfigure}
\usepackage{natbib}
\usepackage{etoolbox}%
\usepackage{relsize} %

\usepackage[noend]{algorithmic}
\usepackage{algorithm}

\usepackage[accepted]{icml2013}

\makeatletter
\renewcommand{\ICML@appearing}{ This is an extend version of the paper
  of the same name which appeared in ICML 2013.  The main addition is
  Appendix A.3, which contains additional proofs. }
\makeatother

\ifdef{\thisisicml}{%
\newcommand{\icmlonly}[1]{#1}
\newcommand{\fullonly}[1]{}
}{%
\newcommand{\icmlonly}[1]{}
\newcommand{\fullonly}[1]{#1}
}

\usepackage{array}
\usepackage{amsfonts}
\usepackage{amssymb}
\usepackage{amstext}
\usepackage{amsmath}
\usepackage{xspace}
\usepackage{amsthm}
\usepackage{epsfig}
\usepackage{color}
\usepackage{url}

\DeclareMathOperator*{\argmin}{arg\,min}

\DeclareMathOperator{\var}{Var}
\DeclareMathOperator{\cov}{Cov}
\DeclareMathOperator{\Erf}{Erf}

\definecolor{darkblue}{rgb}{0,0,0.7}

\newcommand{\defeq}{\equiv}  %
\newcommand{\grad}{\nabla}

\newcommand{\set}[1] {\ensuremath{ \{ {#1} \} }}

\newcommand{\R}[0]{{\ensuremath{\mathbb{R}}}}
\newcommand{\Z}[0]{{\ensuremath{\mathbb{Z}}}}
\newcommand{\F}{\mathcal{F}}

\newcommand{\ti}{_{t+1}}

\newcommand{\hb}{\hat{\beta}}
\newcommand{\bs}{\beta^*}
\newcommand{\eps}{\epsilon}

\newcommand{\abs}[1]{|#1|}
\newcommand{\babs}[1]{\left|#1\right|}
\newcommand{\eqr}[1]{Eq.~\eqref{eq:#1}}

\newcommand{\BO}{\mathcal{O}}
\newcommand{\gam}{\gamma}
\newcommand{\h}{\frac{1}{2}}

\newcommand{\cnt}{\tau} %
\newcommand{\acnt}{\tilde{\tau}}  %
\newcommand{\cntTi}{\mathlarger{\cnt}_{\mbox{\tiny\itshape T}, i}}
\newcommand{\est}{\tilde{\tau}}

\newcommand{\paren} [1] {\ensuremath{ \left( {#1} \right) }}
\newcommand{\E}{\mathbf{E}}
\newcommand{\prob}[1]{\ensuremath{\text{{\bf Pr}$\left[#1\right]$}}}
\newcommand{\expct}[1]{\E \left[#1\right]}
\newcommand{\ceil}[1]{\ensuremath{\left\lceil#1\right\rceil}}
\newcommand{\floor}[1]{\ensuremath{\left\lfloor#1\right\rfloor}}

\renewcommand{\dim} {\ensuremath{d}}
\newcommand{\norm}[2] {\|#1\|_{#2}}
\newcommand{\nrm}[1] {\|#1\|}

\newtheorem{lemma}{Lemma}[section]
\newtheorem{theorem}[lemma]{Theorem}
\newtheorem{corollary}[lemma]{Corollary}

\newcommand{\invlogit}[1]{\ensuremath{\sigma\!\paren{#1}}}

\newcommand{\predict}[1]{\ensuremath{p_{\beta}\paren{#1}}}
\newcommand{\logloss}[1]{\ensuremath{\mathcal{L}(#1)}}

\newcommand{\proj}[1]{\ensuremath{\operatorname{Project}\paren{{#1}}}}
\newcommand{\Proj}{\operatorname{Project}}

\newcommand{\Regret}{\operatorname{Regret}}

\icmltitlerunning{Large-Scale Learning with Less RAM via Randomization}

\begin{document}

\twocolumn[
\icmltitle{Large-Scale Learning with Less RAM via Randomization}

\icmlauthor{Daniel Golovin}{dgg@google.com}

\icmlauthor{D. Sculley}{dsculley@google.com}

\icmlauthor{H. Brendan McMahan}{mcmahan@google.com}

\icmlauthor{Michael Young}{mwyoung@google.com}

\icmladdress{Google, Inc., Pittsburgh, PA, and Seattle, WA}

\icmlkeywords{online learning, online gradient descent, big learning}

\vskip 0.2in
]

\begin{abstract}
  We reduce the memory footprint of popular large-scale online
  learning methods by projecting our weight vector onto a coarse
  discrete set using randomized rounding.  Compared to standard 32-bit
  float encodings, this reduces RAM usage by more than 50\% during
  training and by up to 95\% when making predictions from a fixed model,
  with almost no loss in accuracy.  We also show that randomized
  counting can be used to implement per-coordinate learning rates,
  improving model quality with little additional RAM.
  We prove these memory-saving methods achieve regret guarantees
  similar to their exact variants.  Empirical evaluation confirms
  excellent performance, dominating standard approaches across
  memory versus accuracy tradeoffs.
\end{abstract}

\section{Introduction}
As the growth of machine learning data sets continues to accelerate,
available machine memory (RAM) is an increasingly important
constraint.  This is true for training massive-scale distributed
learning systems, such as those used for predicting ad click through
rates (CTR) for sponsored
search~\citep{richardson:2007,craswell:2008,bilenko:2011,streeter:2010}
or for filtering email spam at scale~\citep{goodman:2007}.  Minimizing
RAM use is also important on a single machine if we wish to utilize
the limited memory of a fast GPU processor, or to simply use fast
L1-cache more effectively.  After training, memory cost remains a key
consideration at prediction time as real-world models are often
replicated to multiple machines to minimize prediction latency.

Efficient learning at peta-scale is commonly achieved by online
gradient descent (OGD)~\citep{zinkevich03} or stochastic gradient
descent (SGD), \citep[e.g.,][]{bottou:2008}, in which many tiny steps
are accumulated in a weight vector $\beta \in \R^{\dim}$.  For
large-scale learning, storing $\beta$ can consume considerable
RAM, especially when datasets far exceed memory
capacity and examples are streamed from network or disk.

Our goal is to reduce the memory needed to store $\beta$.
Standard implementations store coefficients in single precision
floating-point representation, using 32 bits per value.
This provides fine-grained precision needed to accumulate these tiny
steps with minimal roundoff error, but has a dynamic range that far
exceeds the needs of practical machine learning (see Figure~\ref{weights}).

We use coefficient representations that have more limited precision
and dynamic range, allowing values to be stored cheaply.  This coarse
grid does {\em not} provide enough resolution to accumulate gradient
steps without error, as the grid spacing may be larger than the
updates.  But we can obtain a provable safety guarantee through a
suitable OGD algorithm that uses randomized rounding to project its
coefficients onto the grid each round.  The precision of the grid used
on each round may be fixed in advance or changed adaptively as
learning progresses.  At prediction time, more aggressive rounding is
possible because errors no longer accumulate.

Online learning on large feature spaces where some features occur very
frequently and others are rare often benefits from per-coordinate
learning rates, but this requires an additional 32-bit count to be
stored for each coordinate.  In the spirit of randomized rounding, we
limit the memory footprint of this strategy by using an 8-bit
randomized counter for each coordinate based on a variant of Morris's
algorithm (1978).  We show the resulting regret bounds are only
slightly worse than the exact counting variant
(Theorem~\ref{thm:ogd-approx-count}), and empirical results show
negligible added loss.

\paragraph{Contributions}
This paper gives the following theoretical and empirical results:

\vspace{-0.1in}
\begin{enumerate} \itemsep -1.2pt
\item Using a pre-determined fixed-point representation of coefficient
  values reduces cost from 32 to 16 bits per value, at the cost of a
  small linear regret term.
\item The cost of a per-coordinate learning rate schedule can be
  reduced from 32 to 8 bits per coordinate using a randomized counting
  scheme.
\item Using an adaptive per-coordinate coarse representation of
  coefficient values reduces memory cost further and yields a
  no--regret algorithm.
\item Variable-width encoding at prediction time allows coefficients
  to be encoded even more compactly (less than 2 bits per value in
  experiments) with negligible added loss.
\end{enumerate}
\vspace{-0.1in} Approaches 1 and 2 are particularly attractive, as
they require only small code changes and use negligible additional CPU
time.  Approaches 3 and 4 require more sophisticated data structures.

\begin{figure}
\begin{centering}
\includegraphics[width=2.5in]{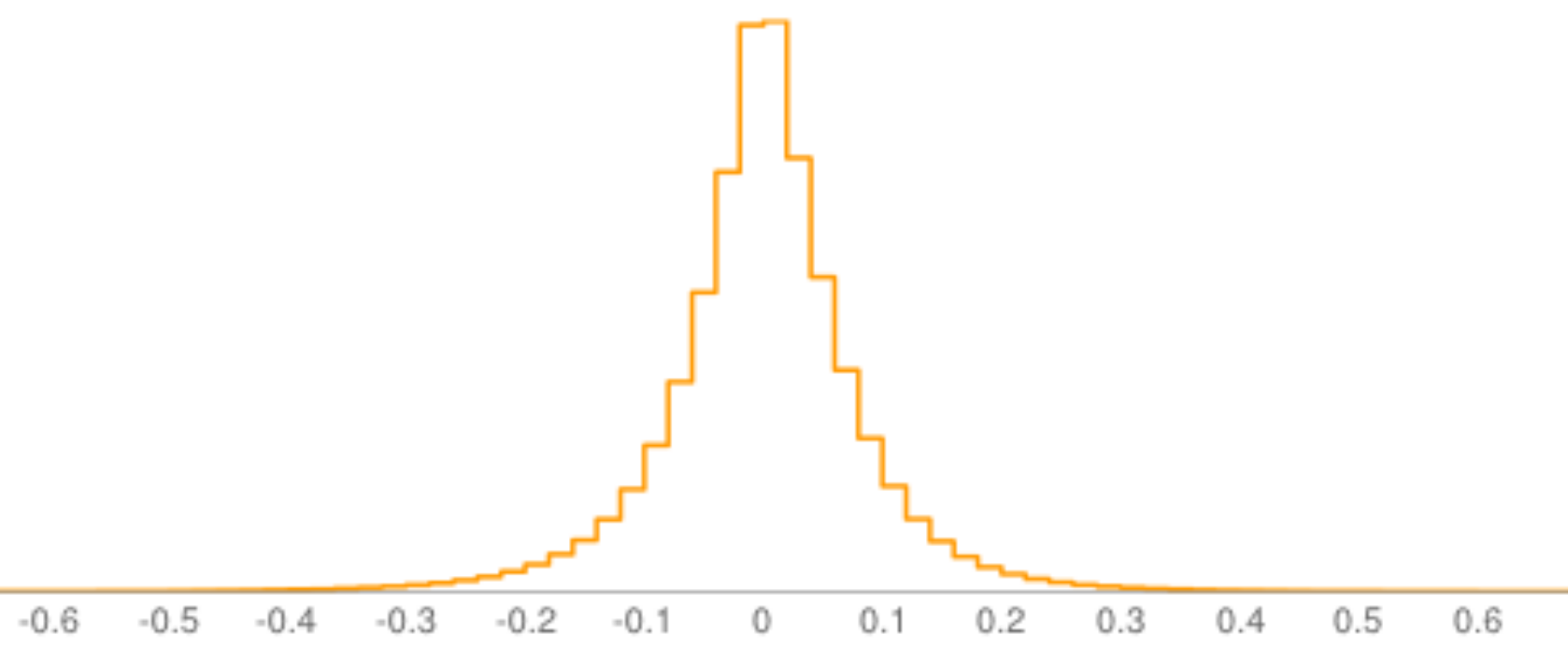}
\vspace{-0.05in}
\caption{Histogram of coefficients in a typical large-scale linear
  model trained from real data.
  Values are tightly grouped near zero; a large dynamic range is
  superfluous.}
\vspace{-0.15in}
\end{centering}
\label{weights}
\end{figure}

\section{Related Work}
In addition to the sources already referenced, related work has been
done in several areas.

\paragraph{Smaller Models}
A classic approach to reducing memory usage is to encourage sparsity,
for example via the Lasso~\cite{tibshirani} variant of least-squares
regression, and the more general application of $L_1$
regularizers~\cite{duchi:2008,langford:2009,xiao09dualaveraging,mcmahan:2011}.
A more recent trend has been to reduce memory cost via the use of
feature hashing~\cite{weinberger:2009}. Both families of approaches
are effective.  The coarse encoding schemes reported here may be used
in conjunction with these methods to give further reductions in memory
usage.

\paragraph{Randomized Rounding}
Randomized rounding schemes have been widely used in numerical
computing and algorithm design~\cite{raghavan:1987}.  Recently, the
related technique of randomized counting has enabled compact language
models \cite{vandurme:2009}.  To our knowledge, this
paper gives the first algorithms and analysis for online learning with
randomized rounding and counting.
\vspace{-0.1in}

\paragraph{Per-Coordinate Learning Rates}
\citet{duchi10adaptive} and \citet{mcmahan10boundopt} demonstrated
that per-coordinate adaptive regularization ({\em i.e.}, adaptive
learning rates) can greatly boost prediction accuracy.  The intuition
is to let the learning rate for common features decrease quickly,
while keeping the learning rate high for rare features.  This
adaptivity increases RAM cost by requiring an additional statistic to
be stored for each coordinate, most often as an additional 32-bit
integer.  Our approach reduces this cost by using an 8-bit
randomized counter instead, using a variant of Morris's
algorithm~\citep{morris78}.

\section{Learning with Randomized Rounding
and Probabilistic Counting}
\label{sec:training-time-results}

For concreteness, we focus on logistic regression with binary feature
vectors $x \in \set{0,1}^{\dim}$ and labels $y \in \set{0,1}$.  The
model has coefficients $\beta \in \R^{\dim}$, and gives predictions
$\predict{x} \defeq \invlogit{\beta \cdot x}$, where $\invlogit{z} \defeq
1/(1+e^{-z})$ is the logistic function.
Logistic regression finds the model that minimizes the logistic--loss
$\mathcal{L}$.  Given a labeled example $(x, y)$
the logistic--loss is
\[
 \logloss{x, y; \beta} \defeq - y \log \paren{\predict{x}}
- (1-y) \log \paren{1 - \predict{x}}
\]
where we take $0 \log 0 = 0$.  Here, we take $\log$ to be the natural
logarithm.
We define $\norm{x}{p}$ as the $\ell_p$ norm of a vector $x$;
when the subscript $p$ is omitted, the $\ell_2$ norm is implied.  We
use the compressed summation notation $g_{1:t} \defeq \sum_{s=1}^t
g_s$ for scalars, and similarly $f_{1:t}(x) \defeq \sum_{s=1}^t
f_s(x)$ for functions.

The basic algorithm we propose and analyze is a
variant of online gradient descent (OGD) that stores
coefficients $\beta$ in a limited precision format using
a discrete set $(\epsilon \Z)^{\dim}$.  For each OGD update, we
compute each new coefficient value in 64-bit floating point representation and
then use randomized rounding to project the updated value back to
the coarser representation.

A useful representation for the discrete set $(\epsilon \Z)^{\dim}$ is
the {\tt Qn.m} fixed-point representation.  This uses {\tt n} bits for
the integral part of the value, and {\tt m} bits for the fractional
part.  Adding in a sign bit results in a total of $K={\tt n} + {\tt m}
+ 1$ bits per value.  The value {\tt m} may be fixed in advance, or
set adaptively as described below.  We use the method RandomRound from
Algorithm~\ref{alg:ogdone} to project values onto this encoding.

The added CPU cost of fixed-point encoding and randomized rounding is
low.  Typically $K$ is chosen to correspond to a machine integer (say
$K=8$ or $16$), so converting back to a floating point representations
requires a single integer-float multiplication (by $\epsilon = 2^{-m}$).
Randomized rounding requires a call to a pseudo-random number
generator, which may be done in 18-20 flops.  Overall, the added CPU
overhead is negligible, especially as many large-scale learning
methods are I/O bound reading from disk or network rather than CPU
bound.

\begin{algorithm}[t]
\caption{OGD-Rand-1d } \label{alg:ogdone}
\begin{small}
\begin{algorithmic}
   \STATE \textbf{input:} feasible set $\F = [-R, R]$,
     learning rate schedule $\eta_t$, resolution schedule $\eps_t$
   \STATE \textbf{define fun} $\proj{\beta} = \max(-R, \min(\beta, R))$
   \STATE Initialize $\hb_1 = 0$
   \FOR{t=1, \dots, T}
     \STATE Play the point $\hb_t$, observe $g_t$
     \STATE $\beta\ti = \Proj\big(\hb_t - \eta_t g_t\big)$
     \STATE $\hb\ti \leftarrow \text{RandomRound}(\beta\ti, \eps_t)$
   \ENDFOR
   \STATE
   \FUNCTION{RandomRound$(\beta, \eps)$}{}
   \STATE $a \leftarrow \eps\floor{\frac{\beta}{\eps}}$;
          $b \leftarrow \eps\ceil{\frac{\beta}{\eps}}$
   \STATE return $\begin{cases}
     b & \text{with prob.}\quad (\beta - a)/\eps\\
     a & \text{otherwise}
   \end{cases}$
   \ENDFUNCTION
\end{algorithmic}
\vspace{-0.1in}
\end{small}
\end{algorithm}

\subsection{Regret Bounds for Randomized Rounding}
We now prove theoretical guarantees (in the form of upper bounds on
regret) for a variant of OGD that uses randomized rounding on an
adaptive grid as well as per-coordinate learning rates.  (These bounds
can also be applied to a fixed grid).  We use the standard definition
\[
\Regret \defeq \sum_{t=1}^T f_t(\hb_t) -
               \argmin_{\bs \in \F} \sum_{t=1}^T f_t(\bs)
\]
given a sequence of convex loss functions $f_t$.  Here the $\hb_t$ our
algorithm plays are random variables, and since we allow the adversary
to adapt based on the previously observed $\hb_t$, the $f_t$ and
post-hoc optimal $\bs$ are also random variables.
We prove bounds on \emph{expected} regret, where the expectation is
with respect to the randomization used by our algorithms
(high-probability bounds are also possible).
We consider regret with respect to the best model in the
\emph{non-discretized} comparison class $\F = [-R, R]^\dim$.

We follow the usual reduction from convex to linear functions
introduced by \citet{zinkevich03}; see also
\citet[Sec. 2.4]{shwartz12online}.  Further, since we consider the
hyper-rectangle feasible set $\F = [-R, R]^\dim$, the linear problem
decomposes into $n$ independent one-dimensional
problems.\footnote{Extension to arbitrary feasible sets is possible,
  but choosing the hyper-rectangle simplifies the analysis; in
  practice, projection onto the feasible set rarely helps performance.
} In this setting, we consider OGD with randomized rounding to an
adaptive grid of resolution $\eps_t$ on round $t$, and an adaptive
learning rate $\eta_t$.  We then run one copy of this algorithm for
each coordinate of the original convex problem, implying that we can
choose the $\eta_t$ and $\eps_t$ schedules appropriately for each
coordinate.  For simplicity, we assume the $\eps_t$ resolutions are
chosen so that $-R$ and $+R$ are always gridpoints.
Algorithm~\ref{alg:ogdone} gives the one-dimensional version, which is
run independently on each coordinate (with a different learning rate
and discretization schedule) in Algorithm~\ref{alg:ogdfull}.  The core
result is a regret bound for Algorithm~\ref{alg:ogdone} (omitted
proofs can be found in the Appendix):

\begin{theorem}\label{thm:ogd}
  Consider running Algorithm~\ref{alg:ogdone} with adaptive
  non-increasing learning-rate schedule $\eta_t$, and discretization
  schedule $\eps_t$ such that $\eps_t \leq \gam \eta_t$ for a
  constant $\gam > 0$. Then, against any sequence of gradients $g_1,
  \dots, g_T$ (possibly selected by an adaptive adversary) with
  $\abs{g_t} \leq G$, against any comparator point $\bs \in [-R, R]$,
  we have
  \[
  \E[\Regret(\bs)] \le \frac{(2R)^2}{2 \eta_T} + \h (G^2 +
  \gam^2)\eta_{1:T} + \gam R \sqrt{T}.
  \]
\end{theorem}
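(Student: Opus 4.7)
My plan is to adapt the standard online gradient descent potential argument by treating the rounding error $\xi_t := \hb\ti - \beta\ti$ as an unbiased bounded perturbation. By construction of RandomRound, conditional on the history up through the computation of $\beta\ti$, $\xi_t$ has mean zero and satisfies $|\xi_t| \le \eps_t$. Expanding $\|\hb\ti - \bs\|^2 = \|\beta\ti - \bs\|^2 + 2\xi_t(\beta\ti - \bs) + \xi_t^2$ and using non-expansiveness of the projection, $\|\beta\ti-\bs\|^2 \le \|\hb_t - \eta_t g_t - \bs\|^2$, and rearranging, one arrives at the familiar per-round inequality
\[
g_t(\hb_t - \bs) \le \frac{\|\hb_t - \bs\|^2 - \|\hb\ti - \bs\|^2}{2\eta_t} + \frac{\eta_t g_t^2}{2} + \frac{\xi_t^2}{2\eta_t} + \frac{\xi_t(\beta\ti - \bs)}{\eta_t}.
\]

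Summing over $t$ and telescoping handles most of the terms cleanly. Since $1/\eta_t$ is non-decreasing and $|\hb_t - \bs|\le 2R$ (the convention that $\pm R$ are gridpoints ensures $\hb_t$ stays in $[-R,R]$), Abel summation bounds the first group by $(2R)^2/(2\eta_T)$. The gradient term contributes $\h G^2 \eta_{1:T}$, and the squared-noise term in expectation satisfies $\E \sum_t \xi_t^2/(2\eta_t) \le \sum_t \eps_t^2/(2\eta_t) \le \h\gam^2\eta_{1:T}$ by the coupling $\eps_t\le\gam\eta_t$. Convexity $f_t(\hb_t) - f_t(\bs)\le g_t(\hb_t-\bs)$ then converts the gradient sum to a bound on $\E[\Regret]$.

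The main obstacle is the cross term $\sum_t \xi_t(\beta\ti - \bs)/\eta_t$: although each $\xi_t$ is conditionally mean zero, the post-hoc optimum $\bs$ may depend on the entire trajectory and hence on all the rounding noises. I would split the sum as $\sum_t \xi_t\beta\ti/\eta_t - \bs\sum_t \xi_t/\eta_t$. The first piece is a martingale difference sum (both $\beta\ti$ and $\eta_t$ are determined before $\xi_t$), so it vanishes in expectation. For the second piece, the pointwise bound $|\xi_t\bs/\eta_t|\le R\gam$ would give a useless $O(T)$ error; instead, factoring out $\bs$ and applying Cauchy--Schwarz with $|\bs|\le R$ yields
\[
\E\babs{\bs\sum_t \xi_t/\eta_t} \le R\sqrt{\E\Big[\Big(\sum_t\xi_t/\eta_t\Big)^2\Big]} = R\sqrt{\sum_t \E[\xi_t^2/\eta_t^2]} \le R\gam\sqrt T,
\]
where the middle equality again uses the martingale-difference property to kill cross terms and the final inequality uses $\eps_t^2 \le \gam^2\eta_t^2$. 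Securing this martingale cancellation, rather than settling for a crude uniform bound, is the crux of the argument and produces the $\gam R\sqrt T$ term in the theorem.
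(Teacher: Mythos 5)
Your proposal is correct and follows essentially the same route as the paper's proof: the Zinkevich telescoping argument, isolating the rounding perturbation as a per-round term, killing the $\xi_t\beta\ti/\eta_t$ part via the conditional mean-zero (martingale-difference) property, bounding the squared-noise term by $\h\gam^2\eta_{1:T}$ using $\eps_t\le\gam\eta_t$, and controlling the $\bs$-dependent sum by $\gam R\sqrt{T}$ via Jensen/Cauchy--Schwarz together with the vanishing covariances of the noise increments. The paper packages the mean-zero cross term slightly differently (via a shift identity applied to $\hb\ti^2-\beta\ti^2$), but the decomposition and all key estimates are identical to yours.
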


By choosing $\gamma$ sufficiently small, we obtain an expected regret
bound that is indistinguishable from the non-rounded version (which is
obtained by taking $\gamma = 0$).  In practice, we find simply
choosing $\gamma = 1$ yields excellent results.  With some care in the
choice of norms used, it is straightforward to extend the above result
to $\dim$ dimensions.  Applying the above algorithm on a
per-coordinate basis yields the following guarantee:

\begin{corollary}\label{cor:full}
  Consider running Algorithm~\ref{alg:ogdfull} on the feasible set $\F
  = [-R, R]^\dim$, which in turn runs Algorithm~\ref{alg:ogdone} on
  each coordinate. We use per-coordinate learning rates $\eta_{t,i} =
  \alpha/\sqrt{\cnt_{t,i}}$ with $\alpha = \sqrt{2} R / \sqrt{G^2 + \gam^2}$,
  where $\cnt_{t,i} \le t$ is the number of non-zero $g_{s,i}$ seen on
  coordinate $i$ on rounds $s=1, \dots, t$.
  Then, against convex loss functions $f_t$, with $g_t$ a subgradient
  of $f_t$ at $\hb_t$, such that $\forall t, \ \norm{g_t}{\infty} \leq
  G$, we have
  \[ \E[\Regret] \le \sum_{i=1}^\dim\left(2R \sqrt{2 \cntTi (G^2 + \gam^2)}
       + \gam R \sqrt{\cntTi}\right).\]
\end{corollary}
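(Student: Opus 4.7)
My plan is a two-stage reduction: first from the $\dim$-dimensional convex problem to $\dim$ independent one-dimensional linear problems, then an application of Theorem~\ref{thm:ogd} to each coordinate with the particular schedules substituted in.  I would begin with the standard convex-to-linear reduction: since $g_t$ is a subgradient of $f_t$ at $\hb_t$, convexity gives $f_t(\hb_t) - f_t(\bs) \le g_t \cdot (\hb_t - \bs)$ for every $\bs \in \F$, so $\Regret \le \sum_t g_t \cdot (\hb_t - \bs)$.  Because $\F = [-R,R]^\dim$ is a Cartesian product, this linear regret decomposes coordinate-wise:
\[
  \Regret \le \sum_{i=1}^\dim \max_{\bs_i \in [-R,R]} \sum_{t=1}^T g_{t,i}(\hb_{t,i} - \bs_i),
\]
reducing the problem to $\dim$ independent one-dimensional regret problems, one per coordinate, each played by one instance of Algorithm~\ref{alg:ogdone}.

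For the per-coordinate analysis I would apply Theorem~\ref{thm:ogd} to coordinate $i$ with $\abs{g_{t,i}} \le \norm{g_t}{\infty} \le G$, learning-rate schedule $\eta_{t,i}$, and resolution $\eps_{t,i}$ chosen to satisfy $\eps_{t,i} \le \gam \eta_{t,i}$.  A small bookkeeping observation is required: on rounds where $g_{t,i} = 0$, the counter $\cnt_{t,i}$ does not advance, so neither $\eta_{t,i}$ nor $\eps_{t,i}$ changes, the gradient step is trivial, and RandomRound leaves $\hb_{t,i}$ unchanged since $\hb_{t,i}$ already lies on the current grid.  Such idle rounds contribute nothing to the one-dimensional regret, so Theorem~\ref{thm:ogd} applied to the active subsequence of length $\cntTi$ yields, per coordinate,
\[
  \frac{(2R)^2}{2\eta_{T,i}} + \h(G^2 + \gam^2) \sum_{k=1}^{\cntTi}\frac{\alpha}{\sqrt{k}} + \gam R \sqrt{\cntTi}.
\]

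Finally, using $\eta_{T,i} = \alpha/\sqrt{\cntTi}$ and the standard inequality $\sum_{k=1}^{n} 1/\sqrt{k} \le 2\sqrt{n}$, the first two terms collapse to $\paren{2R^2/\alpha + \alpha(G^2+\gam^2)}\sqrt{\cntTi}$.  The choice $\alpha = \sqrt{2}R/\sqrt{G^2+\gam^2}$ minimizes and balances this expression at $2\sqrt{2}R\sqrt{G^2+\gam^2}$, giving the coordinate-wise bound $2R\sqrt{2\cntTi(G^2+\gam^2)} + \gam R\sqrt{\cntTi}$.  Summing over $i=1,\dots,\dim$ recovers the statement.

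The only genuine obstacle is the zero-gradient bookkeeping argument: one must verify that restricting to the active subsequence preserves the hypotheses of Theorem~\ref{thm:ogd} (non-increasing $\eta$, and $\eps \le \gam\eta$) even though $\eta_{t,i}$ depends on $\cnt_{t,i}$ rather than on $t$.  This is immediate because $\cnt_{t,i}$ is monotone in $t$ and increments by exactly one on each active round, so restricting attention to those rounds maps the per-coordinate problem cleanly onto the setting of Theorem~\ref{thm:ogd} with $T$ replaced by $\cntTi$; the rest is routine algebra.
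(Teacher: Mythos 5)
Your proposal is correct and matches the paper's own (much terser) argument exactly: the paper likewise sums the Theorem~\ref{thm:ogd} bound over coordinates, restricts to the rounds with $g_{t,i} \neq 0$, and applies $\sum_{t=1}^T 1/\sqrt{t} \le 2\sqrt{T}$ before substituting $\alpha$. Your algebra balancing the two leading terms at $2R\sqrt{2\cntTi(G^2+\gam^2)}$ checks out, and your bookkeeping for the idle rounds is a detail the paper leaves implicit.
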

The proof follows by summing the bound from Theorem~\ref{thm:ogd} over
each coordinate, considering only the rounds when $g_{t,i} \neq 0$,
and then using the inequality $\sum_{t=1}^T 1/\sqrt{t} \leq 2
\sqrt{T}$ to handle the sum of learning rates on each coordinate.

The core intuition behind this algorithm is that for features where we
have little data (that is, $\cnt_i$ is small, for example rare words
in a bag-of-words representation, identified by a binary feature),
using a fine-precision coefficient is unnecessary, as we can't
estimate the correct coefficient with much confidence. This is in fact
the same reason using a larger learning rate is appropriate, so it is
no coincidence the theory suggests choosing $\eps_t$ and $\eta_t$ to
be of the same magnitude.

\begin{algorithm}[t]
\caption{OGD-Rand} \label{alg:ogdfull}
\begin{small}
\begin{algorithmic}
   \STATE \textbf{input:} feasible set $\F = [-R, R]^\dim$,
   parameters $\alpha, \gamma > 0$
   \STATE Initialize $\hb_1 = 0 \in \R^\dim;\ \forall i, \cnt_i = 0$
   \FOR{t=1, \dots, T}
     \STATE Play the point $\hb_t$, observe loss function $f_t$
     \FOR{i=1, \dots, $\dim$}
       \STATE let $g_{t,i} = \grad f_t(x_t)_i$
       \STATE \textbf{if} $g_{t,i} = 0$ \textbf{then} continue
       \STATE $\cnt_i \leftarrow \cnt_i + 1$
       \STATE let $\eta_{t,i} = \alpha/\sqrt{\cnt_i}$
          and $\eps_{t,i} = \gam \eta_{t,i}$
       \STATE $\beta_{t+1,i} \leftarrow
          \Proj\big(\hb_{t,i} - \eta_{t,i} g_{t,i}\big)$
       \STATE $\hb_{t+1,i} \leftarrow \text{RandomRound}(
          \beta_{t+1,i}, \eps_{t,i})$
     \ENDFOR
   \ENDFOR
\end{algorithmic}
\end{small}
\end{algorithm}

\paragraph{Fixed Discretization} Rather than implementing an adaptive
discretization schedule, it is more straightforward and more efficient
to choose a fixed grid resolution, for example a 16-bit \texttt{Qn.m}
representation is sufficient for many applications.\footnote{If we
  scale $x \rightarrow 2x$ then we must take $\beta \rightarrow
  \beta/2$ to make the same predictions, and so appropriate choices of
  \texttt{n} and \texttt{m} must be data-dependent.}  In this case,
one can apply the above theory, but simply stop decreasing the
learning rate once it reaches say $\eps$ ($=2^{-{\tt m}}$).  Then, the
$\eta_{1:T}$ term in the regret bound yields a linear term like
$\BO(\eps T)$; this is unavoidable when using a fixed resolution
$\eps$.
One could let the learning rate continue to decrease like
$1/\sqrt{t}$, but this would provide no benefit; in fact,
lower-bounding the learning-rate is known to allow online gradient
descent to provide regret bounds against a moving comparator
\citep{zinkevich03}.

\paragraph{Data Structures}
There are several viable approaches to storing models with
variable--sized coefficients.  One can store all keys at a fixed
(low) precision, then maintain a sequence of maps ({\em e.g.}, as
hash-tables), each containing a mapping from keys to coefficients of
increasing precision.  Alternately, a simple linear probing
hash--table for variable length keys is efficient
for a wide variety of distributions on key lengths, as demonstrated
by~\citet{thorup:2009}.  With this data structure, keys and
coefficient values can be treated as strings over $4$-bit or $8$-bit
bytes, for example.  \citet{blandford:2008} provide yet another data
structure: a compact dictionary for variable length keys.  Finally,
for a fixed model, one can write out the string $s$ of all
coefficients (without end of string delimiters), store a second binary
string of length $s$ with ones at the coefficient boundaries, and use
any of a number of rank/select data structures to index into it, {\em e.g.},
the one of~\citet{patrascu:2008}.

\subsection{Approximate Feature Counts}\label{sec:approxcounts}
\newcommand{\base}[0]{\ensuremath{b}}
\newcommand{\counter}[0]{\ensuremath{C}}
\newcommand{\incprob}[0]{\ensuremath{p}}

Online convex optimization methods typically use a learning rate that
decreases over time, {\em e.g.}, setting $\eta_{t}$ proportional to
$1/\sqrt{t}$.  Per-coordinate learning rates require storing a unique
count $\cnt_i$ for each coordinate, where $\cnt_i$ is the number of
times coordinate $i$ has appeared with a non-zero gradient so far.
Significant space is saved by using a 8-bit randomized counting scheme
rather than a 32-bit (or 64-bit) integer to store the $d$ total
counts.
We use a variant of Morris' probabilistic counting algorithm (1978)
analyzed by~\citet{flajolet:1985}.  Specifically, we initialize a
counter $\counter = 1$, and on each increment operation, we increment
$\counter$ with probability $\incprob(\counter) = \base^{-\counter}$,
where base $\base$ is a parameter.  We estimate the count as
$\est(\counter) = \frac{\base^{\counter} - \base}{\base - 1}$, which
is an unbiased estimator of the true count.  We then use learning
rates $\eta_{t,i} = \alpha/\sqrt{\acnt_{t,i} + 1}$, which ensures that
even when $\acnt_{t,i} = 0$ we don't divide by zero.

We compute high-probability bounds on this counter in
Lemma~\ref{lem:approx-counting}.  Using these bounds for $\eta_{t,i}$
in conjunction with Theorem~\ref{thm:ogd}, we obtain the following
result (proof deferred to the appendix).

\begin{theorem}\label{thm:ogd-approx-count}
  Consider running the algorithm of Corollary~\ref{cor:full} under the
  assumptions specified there, but using approximate counts $\acnt_i$
  in place of the exact counts $\cnt_i$.  The approximate counts are
  computed using the randomized counter described above with any base
  $\base > 1$.  Thus, $\acnt_{t,i}$ is the estimated number of times
  $g_{s,i} \ne 0$ on rounds $s=1, \dots, t$, and the per--coordinate
  learning rates are $\eta_{t,i} = \alpha/\sqrt{\acnt_{t,i} + 1}$.  With
  an appropriate choice of $\alpha$ we have
  \[
  \E[\Regret(g)]=  o\paren{R \sqrt{G^2 + \gamma^2} T^{0.5 + \delta}}
     \quad \text{ for all } \delta > 0,
  \]
  where the $o$-notation hides a small constant factor and the
  dependence on the base $\base$.\footnote{\eqr{fullregret} in the
    appendix provides a non-asymptotic (but more cumbersome)
    regret bound.}
\end{theorem}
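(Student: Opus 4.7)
The plan is to reuse the per-coordinate regret bound from Theorem~\ref{thm:ogd} (exactly as in the argument for Corollary~\ref{cor:full}), but now with the random learning rate $\eta_{t,i}=\alpha/\sqrt{\acnt_{t,i}+1}$. Because the proof of Theorem~\ref{thm:ogd} is a sample-path argument that only requires $\eta_t$ to be non-increasing and measurable with respect to the information available at time $t$, it applies with Morris's random $\eta_t$; taking expectation over Morris's randomness gives
\[
\E[\Regret_i] \;\le\; \E\!\left[\frac{2R^2}{\eta_{T,i}} + \tfrac{1}{2}(G^2+\gamma^2)\,\eta_{1:T,i}\right] + \gamma R\sqrt{\cntTi}.
\]
The entire remaining work is to control the random quantities $1/\eta_{T,i}$ and $\eta_{1:T,i}$ using concentration for Morris's counter.

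I would first invoke Lemma~\ref{lem:approx-counting} to obtain a two-sided multiplicative event, for some slack $\phi=\phi(T,\delta)>1$,
\[
\mathcal{E}:\ \tfrac{1}{\phi}(\cnt_{t,i}+1)\;\le\;\acnt_{t,i}+1\;\le\;\phi\,(\cnt_{t,i}+1)\quad \forall\, t\le T,\,i\le \dim,
\]
holding with probability at least $1-\psi$ after a union bound over rounds and coordinates. On $\mathcal{E}$ the final-round learning rate satisfies $1/\eta_{T,i}\le \alpha^{-1}\sqrt{\phi(\cntTi+1)}$, and summing $1/\sqrt{\acnt_{t,i}+1}\le \sqrt{\phi}/\sqrt{\cnt_{t,i}}$ over the rounds where $g_{t,i}\ne 0$ together with $\sum_{\tau=1}^{N}\tau^{-1/2}\le 2\sqrt{N}$ gives $\eta_{1:T,i}\le 2\alpha\sqrt{\phi\,\cntTi}$. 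Picking $\alpha$ analogously to Corollary~\ref{cor:full} then yields a conditional per-coordinate regret of order $R\sqrt{(G^2+\gamma^2)\phi\,\cntTi}$; summing over coordinates and using $\cntTi\le T$ produces an on-$\mathcal{E}$ contribution of order $R\sqrt{(G^2+\gamma^2)\phi\,T}$ (with the implicit dimension dependence swept into the hidden constant).

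For the complementary event I would use the deterministic single-step bound $|f_t(\hb_t)-f_t(\bs)|\le 2RG$ afforded by the hyper-rectangle feasible set and $\norm{g_t}{\infty}\le G$, so $\Regret\le 2RGT$ always, contributing $O(RGT\,\psi)$ to the expectation. The main obstacle is calibrating $\phi$ and $\psi$ through the concentration of Morris's counter: Flajolet's analysis yields an unbiased estimator with variance $\Theta((\base-1)\cnt^2)$, so only Chebyshev-style tail bounds of the form $\Pr[|\est-\cnt|\ge(\phi-1)\cnt]=O((\base-1)/(\phi-1)^2)$ are available. To drive $\psi$ down polynomially in $T$ one must therefore choose $\phi=T^{2\delta}$ for arbitrary $\delta>0$; this $\sqrt{\phi}=T^{\delta}$ slack inside the square root is precisely what converts the $\sqrt{T}$ rate of Corollary~\ref{cor:full} into the claimed $T^{0.5+\delta}$, while the base $\base$ enters only through the hidden constant coming from the estimator's variance.
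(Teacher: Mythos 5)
Your overall architecture matches the paper's: bound each coordinate via Theorem~\ref{thm:ogd} with the random learning rates $\eta_{t,i}=\alpha/\sqrt{\acnt_{t,i}+1}$, control $1/\eta_{T,i}$ and $\eta_{1:T,i}$ on a high-probability event for the Morris counter, and charge the complementary event against the worst-case regret $2RGT$. The genuine gap is in the concentration step. You assert that only Chebyshev-type tails $\prob{|\acnt_t-\cnt_t|\ge(\phi-1)\cnt_t}=O\big((b-1)/(\phi-1)^2\big)$ are available and propose $\phi=T^{2\delta}$. This cannot deliver the theorem for all $\delta>0$. After the union bound over $T$ rounds the failure probability is $\psi=O\big(T(b-1)/(\phi-1)^2\big)$, and the failure event contributes $O(RGT\psi)$ to the expected regret; making that contribution $o(T^{0.5+\delta})$ forces $(\phi-1)^2\gtrsim T^{1.5-\delta}$, hence $\sqrt{\phi}\gtrsim T^{0.375-\delta/4}$, and then your on-event term $R\sqrt{(G^2+\gamma^2)\phi\,T}$ is of order $T^{0.875-\delta/4}$, which exceeds $T^{0.5+\delta}$ whenever $\delta<0.3$. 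No calibration of $\phi$ against $\psi$ reconciles the two requirements for small $\delta$ (the same obstruction persists even without the union bound), so the quantifier ``for all $\delta>0$'' is lost.

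The missing idea is that Morris's counter admits far sharper tail bounds than Chebyshev, derived directly from the dynamics of the counting process rather than from Flajolet's variance formula; this is exactly Lemma~\ref{lem:approx-counting}. For the lower tail, the number of increment operations $X_j$ during which the counter sits at value $j$ satisfies $\prob{X_j\ge \ell}\le(1-b^{-j})^{\ell}$, so taking $\ell_j=cb^{j}\log T$ and a union bound gives $\acnt_t+1\ge t/(bc\log T)$ simultaneously for all $t$ except with probability $T^{-(c-1)}$. For the upper tail, a binomial-coefficient argument gives $\acnt_t\le\frac{et}{b-1}\,b^{\sqrt{2c\log_b T}+2}$ except with probability $T^{-c}$. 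The two multiplicative slacks, $bc\log T$ and $b^{O(\sqrt{\log T})}$, are both $o(T^{\delta})$ for \emph{every} $\delta>0$, while the failure probabilities decay polynomially with an exponent you control via $c$ (the paper takes $c=2.5$). Substituting these asymmetric bounds in place of your symmetric factor $\phi$ closes the argument and recovers the paper's proof; with Chebyshev alone it does not close.
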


\begin{figure*}
\begin{centering}
\vspace{-0.1in}
\includegraphics[width=6.0in]{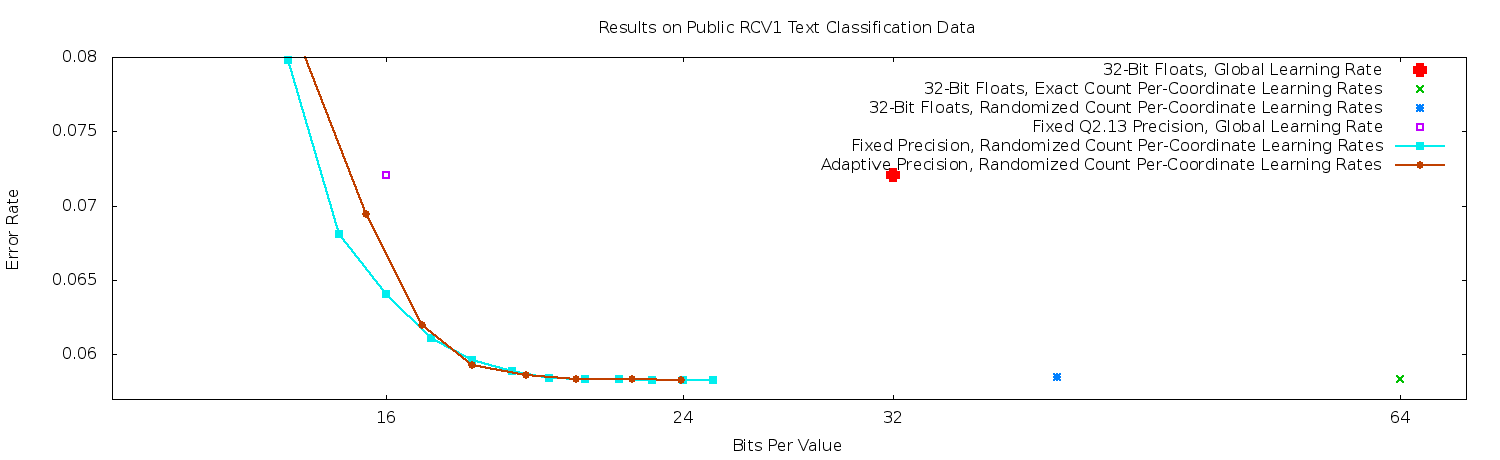}
\includegraphics[width=6.0in]{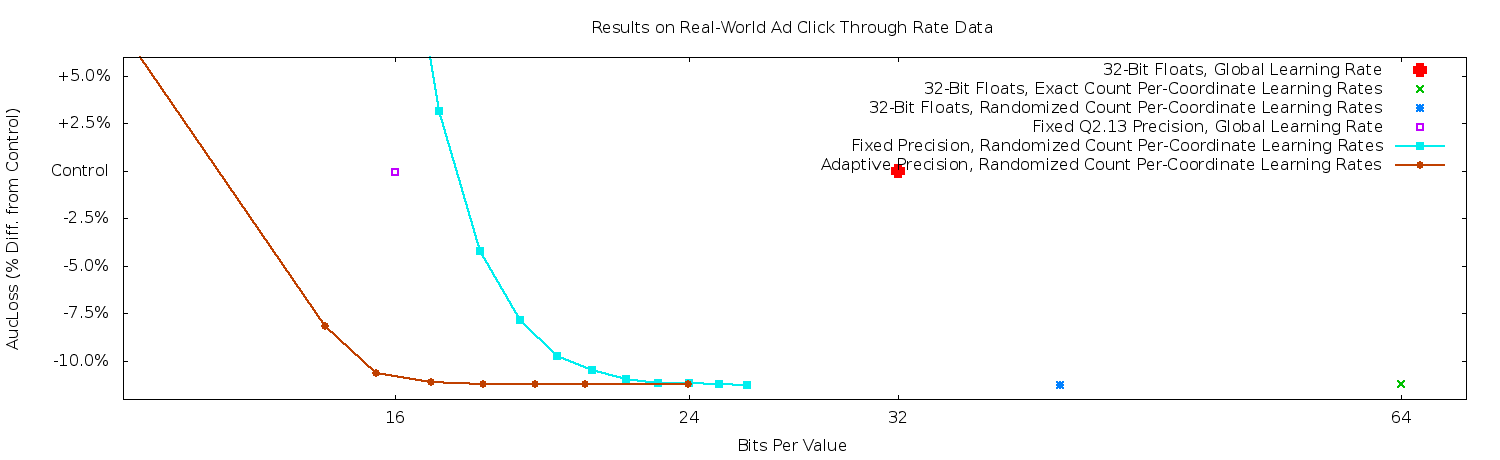}
\vspace{-0.2in}
\caption{ Rounding at Training Time.  The fixed {\tt q2.13} encoding
  is 50\% smaller than control with no loss.  Per-coordinate learning
  rates significantly improve predictions but use 64 bits per value.
  Randomized counting reduces this to 40 bits.  Using adaptive or
  fixed precision reduces memory use further, to 24 total bits per
  value or less.  The benefit of adaptive precision is seen more on
  the larger CTR data. } \label{fig:training-results}
\end{centering}
\vspace{-0.15in}
\end{figure*}

\section{Encoding During Prediction Time} \label{sec:prediction}
Many real-world problems require large-scale {\em prediction}.
Achieving scale may require that a trained model be replicated
to multiple machines~\cite{bucilua:2006}.  Saving RAM via rounding
is especially attractive here, because unlike in training
accumulated roundoff error is no longer an issue.
This allows even more aggressive rounding to be used safely.

Consider a rounding a trained model $\beta$ to some $\hat{\beta}$.  We
can bound both the additive and relative effect on logistic--loss
$\logloss{\cdot}$ in terms of the quantity $|\beta \cdot x -
\hat{\beta} \cdot x|$:
\begin{lemma}[Additive Error] \label{lem:fixed-error} Fix $\beta,
  \hat{\beta}$ and $(x,y)$.  Let $\delta = |\beta \cdot x -
  \hat{\beta} \cdot x|$.  Then the logistic--loss satisfies
  \[
    \logloss{x,y;\hat{\beta}} - \logloss{x,y;\beta} \le \delta.
  \]
\end{lemma}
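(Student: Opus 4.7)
The plan is to reduce the bound to a Lipschitz property of the one-dimensional logistic loss as a function of the linear predictor, then apply the mean value theorem.

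First, I would introduce $z \defeq \beta \cdot x$ and $\hat{z} \defeq \hat{\beta} \cdot x$, so that $\delta = |\hat{z} - z|$, and define the single-variable function
\[
  \ell(z; y) \defeq -y \log \invlogit{z} - (1-y) \log\paren{1 - \invlogit{z}},
\]
so that $\logloss{x,y;\beta} = \ell(z;y)$ and $\logloss{x,y;\hat{\beta}} = \ell(\hat{z};y)$. The task reduces to showing $\ell(\hat{z};y) - \ell(z;y) \leq |\hat{z}-z|$.

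Next, I would compute $\frac{d}{dz}\ell(z;y)$ case by case. Using $\invlogit{z}' = \invlogit{z}(1-\invlogit{z})$, for $y=1$ I get $\ell'(z;1) = -(1-\invlogit{z})$, and for $y=0$ I get $\ell'(z;0) = \invlogit{z}$. Since $\invlogit{z} \in (0,1)$ in both cases, $|\ell'(z;y)| \leq 1$ for every $z$ and every $y \in \set{0,1}$. So $\ell(\cdot;y)$ is $1$-Lipschitz on all of $\R$.

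Finally, by the mean value theorem there exists $\xi$ between $z$ and $\hat{z}$ with
\[
  \ell(\hat{z};y) - \ell(z;y) = \ell'(\xi;y)(\hat{z}-z),
\]
and taking absolute values yields $\logloss{x,y;\hat{\beta}} - \logloss{x,y;\beta} \leq |\hat{z}-z| = \delta$, which is the claimed bound.

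There is essentially no obstacle here; the only mildly delicate step is handling the two values of $y$ uniformly, which is immediate once one notes that both branches of $\ell'$ are bounded in $[-1,1]$. No convexity or second-order information about $\logloss{\cdot}$ is needed, only the Lipschitz constant $1$ of the sigmoid-based log-loss in its scalar argument.
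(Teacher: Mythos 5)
Your proof is correct and follows essentially the same route as the paper, which simply cites the fact that the derivative of the logistic loss (with respect to the coefficients, and hence with respect to the scalar log-odds $z=\beta\cdot x$) is bounded by $1$ in absolute value; you have just made that one-line Lipschitz argument fully explicit via the mean value theorem.
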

\vspace{-0.1in}

\begin{proof}
  It is well known that $\left| \frac{\partial
      \logloss{x,y;\beta}}{\partial \beta_i} \right| \le 1$ for all
  $x,y,\beta$ and $i$, which implies the result.
\end{proof}

\begin{lemma}[Relative Error] \label{lem:relative-error}
Fix $\beta, \hat{\beta}$ and $(x,y) \in \set{0,1}^{\dim} \times \set{0,1}$.
Let $\delta = |\beta \cdot x - \hat{\beta} \cdot x|$.
Then
\[
\frac{\logloss{x,y;\hat{\beta}} - \logloss{x,y;\beta}}{\logloss{x,y;\beta}}
  \le e^\delta - 1.
\]
\end{lemma}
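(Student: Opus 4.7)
The plan is to reduce the claim to a one-dimensional Lipschitz statement about $\log \logloss{x,y;\cdot}$, viewed as a function of the linear score $z \defeq \beta \cdot x$. Setting $\hat z \defeq \hat{\beta}\cdot x$, we have $|z - \hat z| = \delta$, and the loss depends on the coefficient vector only through the score: writing $L(z) \defeq \logloss{x,y;\beta}$, one checks that $L(z) = \log(1+e^{-z})$ when $y=1$ and $L(z) = \log(1+e^{z})$ when $y=0$. The two cases are related by $z \mapsto -z$, so it suffices to handle one of them.

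The key reduction is to show that $\log L$ is $1$-Lipschitz on $\R$, i.e.\ $\abs{(\log L)'(z)} \le 1$ for every $z$. Granted this, the mean value theorem yields
\[
\log L(\hat z) - \log L(z) \;\le\; |\hat z - z| \;=\; \delta,
\]
which exponentiates to $L(\hat z) \le e^\delta L(z)$ and rearranges into exactly the claimed bound $(L(\hat z) - L(z))/L(z) \le e^\delta - 1$.

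For the Lipschitz step, differentiating in the $y=1$ case gives $(\log L)'(z) = -\sigma(-z)/\log(1+e^{-z})$, so the target inequality is equivalent to the scalar claim
\[
\sigma(u) \;\le\; \log(1+e^u) \qquad \text{for all } u \in \R,
\]
with $u = -z$; the $y=0$ case is the same by the $z \mapsto -z$ symmetry. I would prove this scalar inequality by setting $f(u) \defeq \log(1+e^u) - \sigma(u)$ and computing $f'(u) = \sigma(u) - \sigma(u)(1-\sigma(u)) = \sigma(u)^2 \ge 0$, so that $f$ is non-decreasing; since $\lim_{u \to -\infty} f(u) = 0$, this forces $f \ge 0$ on all of $\R$.

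The main technical point is precisely this scalar sub-inequality, particularly its behavior as $u \to -\infty$, where both $\sigma(u)$ and $\log(1+e^u)$ vanish to leading order like $e^u$ and the sign of the gap is determined by subleading terms. The monotonicity argument based on $f'= \sigma^2$ is what lets me avoid estimating that gap directly; aside from it, the remaining steps (MVT and exponentiation) are routine.
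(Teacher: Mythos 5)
Your proposal is correct, but it takes a genuinely different route from the paper. Both proofs first reduce to a one-dimensional statement about the score $z = \beta\cdot x$ with loss $L(z) = \log(1+e^{-z})$ (for $y=1$; $y=0$ by symmetry). The paper then observes that only the case $\hat z = z - \delta$ matters, rewrites the claim $L(z-\delta) \le e^{\delta}L(z)$ as $1 + wu \le (1+u)^w$ with $w = e^{\delta} \ge 1$ and $u = e^{-z} \ge 0$, and invokes Bernoulli's inequality --- a two-line argument with no calculus. You instead prove the stronger structural fact that $\log L$ is $1$-Lipschitz in $z$, which reduces to the scalar inequality $\sigma(u) \le \log(1+e^u)$; your monotonicity argument for it (the difference has derivative $\sigma(u)^2 \ge 0$ and vanishes at $-\infty$) is correct, and the MVT and exponentiation steps are routine. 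The two arguments are in fact the differential and integral faces of the same inequality: your scalar bound $\sigma(-z) \le \log(1+e^{-z})$ is exactly the derivative of Bernoulli's inequality $\log(1+wu)\le w\log(1+u)$ in $w$ at $w=1$, and your MVT step re-integrates it. What the paper's route buys is brevity. What yours buys is a reusable and slightly stronger conclusion --- the two-sided bound $e^{-\delta} \le L(\hat z)/L(z) \le e^{\delta}$, i.e.\ log-Lipschitzness of the logistic loss in the score --- at the cost of having to establish the (mildly delicate, as you note) behavior near $u \to -\infty$. One small point worth making explicit in either proof: $L(z) > 0$ for all finite $z$, so dividing by $\logloss{x,y;\beta}$ and taking $\log L$ are both legitimate.
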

\newcommand{\hz}{\hat{z}} \icmlonly{Proofs for results in this
  section can be found in the extended version of this paper.
}\fullonly{See the appendix for a proof.}  Now, suppose we are using
fixed precision numbers to store our model coefficients such as the
{\tt Qn.m} encoding described earlier, with a precision of $\epsilon$.
This induces a grid of feasible model coefficient vectors.  If we
randomly round each coefficient $\beta_i$ (where $\abs{\beta_i} \le
2^{\tt n}$) independently up or down to the nearest feasible value
$\hat{\beta}_i$, such that $\E[\hat{\beta}_i] = \beta_i$, then for any
$x \in \set{0,1}^{\dim}$ our predicted log-odds ratio, $\hat{\beta}
\cdot x$ is distributed as a sum of independent random variables
$\set{\hat{\beta}_i \mid x_i = 1}$.

Let $k = \norm{x}{0}$.  In this situation, note that $|\beta \cdot x -
\hat{\beta} \cdot x| \le \epsilon \nrm{x}_1 = \epsilon k$, since
$|\beta_i -\hat{\beta}_i| \le \epsilon$ for all $i$.  Thus
Lemma~\ref{lem:fixed-error} implies
\[
\logloss{x,y;\hat{\beta}} - \logloss{x,y;\beta} \le \epsilon\, \nrm{x}_1.
\]
Similarly, Lemma~\ref{lem:relative-error} immediately provides an
upper bound of $e^{\eps k} - 1$ on relative logistic error; this
bound is relatively tight for small $k$, and holds with probability
one, but it does not exploit the fact that the randomness is unbiased
and that errors should cancel out when $k$ is large.  The following
theorem gives a bound on expected relative error that is much tighter
for large $k$:

\begin{theorem} \label{thm:rounding-error} Let $\hat{\beta}$ be a
  model obtained from $\beta$ using unbiased randomized rounding to a
  precision $\epsilon$ grid as described above.
  Then, the expected logistic--loss relative error of $\hat{\beta}$ on
  any input $x$ is at most $2 \sqrt{2 \pi k}\,
  \exp\paren{\epsilon^2 k / 2} \epsilon $ where $k = \norm{x}{0}$.
\end{theorem}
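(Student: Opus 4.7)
The plan is to bound the random quantity $Z \defeq \hat{\beta} \cdot x - \beta \cdot x$ in expectation via its moment generating function and then invoke the pointwise relative-error bound already proven in Lemma~\ref{lem:relative-error}. Writing $R_i \defeq \hat{\beta}_i - \beta_i$, the construction of the randomized rounding gives $Z = \sum_{i : x_i = 1} R_i$, a sum of exactly $k = \norm{x}{0}$ independent, mean-zero random variables, each satisfying $|R_i| \le \eps$. So $Z$ is a centered, bounded, independent sum to which standard concentration tools apply.

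Next I would apply Lemma~\ref{lem:relative-error} pointwise: for every realization of $\hat{\beta}$, the relative logistic--loss error is at most $e^{|Z|} - 1$. Taking expectations, it suffices to bound $\E[\,e^{|Z|} - 1\,]$. I would rewrite this using the layer-cake identity
\[
\E\bigl[e^{|Z|} - 1\bigr] = \int_0^\infty e^t\, \prob{|Z| \ge t}\, dt,
\]
which converts the problem into one about tail probabilities of $Z$.

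Then I would plug in Hoeffding's inequality for $Z$: since each $R_i$ is independent, mean zero, and bounded in an interval of half-width $\eps$, we obtain $\prob{|Z| \ge t} \le 2\exp\!\paren{-t^2/(2 k \eps^2)}$. Substituting yields
\[
\E\bigl[e^{|Z|} - 1\bigr] \le 2 \int_0^\infty \exp\!\paren{t - \frac{t^2}{2 k \eps^2}}\, dt,
\]
and the remainder is a standard Gaussian integral: completing the square in the exponent gives a factor of $\exp(k\eps^2/2)$, and the residual Gaussian (truncated to $t \ge 0$) contributes at most $\eps\sqrt{2\pi k}$. Combining produces the claimed bound $2\sqrt{2\pi k}\,\eps\,\exp(\eps^2 k/2)$.

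The only real obstacle is tracking constants: choosing the right Hoeffding form to get exactly $2\exp(-t^2/(2k\eps^2))$, and being slightly loose in the truncated Gaussian integral so that one ends with the clean factor $\sqrt{2\pi k}$ rather than a half thereof. There is no conceptual subtlety beyond these bookkeeping steps; the argument is essentially ``apply Lemma~\ref{lem:relative-error}, then Hoeffding, then complete the square.''
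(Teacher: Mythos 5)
Your proposal is correct and follows essentially the same route as the paper: apply Lemma~\ref{lem:relative-error} pointwise to reduce to bounding $\E[e^{\delta}-1]$, use the Azuma--Hoeffding tail bound $\prob{|\hat{\beta}\cdot x - \beta\cdot x| \ge t} \le 2\exp(-t^2/(2\epsilon^2 k))$ on the independent mean-zero sum, and finish by completing the square in a Gaussian integral. The only difference is cosmetic (you change variables to $t$ before integrating the tail, whereas the paper integrates in the $r = e^{\delta}-1$ variable and substitutes at the end), and your constant bookkeeping matches the stated bound.
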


\paragraph{Additional Compression} Figure~\ref{weights} reveals that
coefficient values are not uniformly distributed.  Storing these
values in a fixed-point representation means that individual values
will occur many times.  Basic information theory shows that
the more common values may be encoded with fewer bits.
The theoretical bound for a whole model with $d$ coefficients
is $\frac{-\sum_{i=1}^d \log p(\beta_i)}{d}$ bits per value, where
$p(v)$ is the probability of occurrence of $v$ in $\beta$ across all
dimensions $d$.  Variable length encoding schemes may approach this limit
and achieve further RAM savings.

\begin{table}
  \caption{
    Rounding at Prediction Time for CTR Data.  Fixed-point encodings are
    compared to a 32-bit floating point control model.  Added loss is
    negligible even when using only 1.5 bits per value with
    optimal encoding.}
 \label{prediction}
\vspace{0.1in}
\centering
\begin{small}
\renewcommand{\arraystretch}{1.2}
\begin{tabular}{ccc}
\hline
\sc  Encoding  &  AucLoss  &  Opt. Bits/Val \\
\hline
\sc q2.3            & +5.72\%    &  0.1  \\
\sc q2.5            & +0.44\%    &  0.5  \\
\sc q2.7            & +0.03\%    &  1.5  \\
\sc q2.9            & +0.00\%    &  3.3  \\
\hline
\end{tabular}
\end{small}
\end{table}

\section{Experimental Results}
We evaluated on both public and private large data sets.  We used the
public RCV1 text classification data set, specifically from
\citet{libsvm}.
In keeping with common practice on this data set, the
smaller ``train'' split of 20,242 examples was used for parameter
tuning and the larger ``test'' split of 677,399 examples was used for
the full online learning experiments.  We also report results from
a private CTR data set of roughly 30M examples and 20M features, sampled
from real ad click data from a major search engine.
Even larger experiments were run on data sets of billions of examples
and billions of dimensions, with similar results as those reported here.

The evaluation metrics for predictions are error rate for the RCV1
data, and AucLoss (or 1-AUC) relative to a control model for the CTR
data.  Lower values are better.  Metrics are computed using
progressive validation~\cite{blum:1999} as is standard for online
learning: on each round a prediction is made for a given example and
record for evaluation, and only after that is the model allowed to
train on the example.  We also report the number of bits per
coordinate used.

\paragraph{Rounding During Training} Our main results are given in
Figure~\ref{fig:training-results}.  The comparison baseline is online
logistic regression using a single global learning rate and 32-bit
floats to store coefficients.  We also test the effect of
per-coordinate learning rates with both 32-bit integers for exact
counts and with 8-bit randomized counts.  We test the range of
tradeoffs available for fixed-precision rounding with randomized
counts, varying the number of precision {\tt m} in {\tt q2.m} encoding
to plot the tradeoff curve (cyan).  We also test the range of
tradeoffs available for adaptive-precision rounding with randomized
counts, varying the precision scalar $\gam$ to plot the tradeoff curve
(dark red).  For all randomized counts a base of 1.1 was used.  Other
than these differences, the algorithms tested are identical.

Using a single global learning rate, a fixed {\tt q2.13} encoding
saves 50\% of the RAM at no added loss compared to the baseline.  The
addition of per-coordinate learning rates gives significant improvement in
predictive performance, but at the price of added memory consumption,
increasing from 32 bits per coordinate to 64 bits per coordinate in
the baselines.  Using randomized counts reduces this down to 40 bits
per coordinate.  However, both the fixed-precision and the adaptive
precision methods give far better results, achieving the same
excellent predictive performance as the 64-bit method with 24 bits per
coefficient or less.  This saves 62.5\% of the RAM cost compared
to the 64-bit method, and is still smaller than using 32-bit floats
with a global learning rate.

The benefit of adaptive precision is only apparent on the larger CTR
data set, which has a ``long tail'' distribution of support across
features.
However, it is useful to note that the simpler
fixed-precision method also gives great benefit.  For example,
using {\tt q2.13} encoding for coefficient values and 8-bit randomized
counters allows full-byte alignment in naive data structures.

\paragraph{Rounding at Prediction Time} We tested the
effect of performing coarser randomized rounding of a fully-trained
model on the CTR data, and compared to the loss incurred using a 32-bit floating point
representation.
These results, given in Table~\ref{prediction},
clearly support the theoretical analysis that suggests more aggressive
rounding is possible at prediction time.  Surprisingly coarse levels
of precision give excellent results, with little or no loss in
predictive performance.  The memory savings achievable in this scheme
are considerable, down to {\em less than two bits per value} for {\tt q2.7}
with theoretically optimal encoding of the discrete values.

\section{Conclusions}
Randomized storage of coefficient values provides an efficient method
for achieving significant RAM savings both during training and at
prediction time.

While in this work we focus on OGD, similar randomized rounding
schemes may be applied to other learning algorithms.  The extension to
algorithms that efficiently handle $L_1$ regularization, like
RDA~\citep{xiao09dualaveraging} and
FTRL-Proximal~\citep{mcmahan:2011}, is relatively
straightforward.\footnote{Some care must be taken to store a
  discretized version of a scaled gradient sum, so that the dynamic
  range remains roughly unchanged as learning progresses.}  Large
scale kernel machines, matrix decompositions, topic models, and other
large-scale learning methods may all be modifiable to take advantage
of RAM savings through low precision randomized rounding methods.

\fullonly{
\section*{Acknowledgments}
We would like to thank Matthew Streeter, Gary Holt, Todd Phillips, and
Mark Rose for their help with this work.
}

\appendix
\section{Appendix: Proofs}

\subsection{Proof of Theorem~\ref{thm:ogd}}
Our analysis extends the technique of~\citet{zinkevich03}.  Let
$\beta^*$ be any feasible point (with possibly infinite precision
coefficients).  By the definition of $\beta\ti$, %
\begin{equation*}
\nrm{\beta\ti - \beta^*}^2 = \nrm{\hb_t - \beta^*}^2
  - 2\eta_t g_t \cdot (\hb_t - \beta^*) + \eta^2_t \nrm{g_t}^2.
\end{equation*}
Rearranging the above yields
\begin{align*} \label{eqn:zinkevich1}
 g_t &\cdot (\hb_t - \beta^*) \notag\\
  &\le \frac{1}{2 \eta_t} \paren{
  \nrm{\hb_t - \beta^*}^2 - \nrm{\beta\ti - \beta^*}^2 }
   + \frac{\eta_t}{2} \nrm{g_t}^2 \\
  &= \frac{1}{2 \eta_t} \paren{ \nrm{\hb_t - \beta^*}^2 -
    \nrm{\hb\ti - \beta^*}^2 }  \notag
  + \frac{\eta_t}{2} \nrm{g_t}^2 + \rho_t,
\end{align*}
where the $\rho_t = \frac{1}{2 \eta_t} \paren{\nrm{\hb\ti - \beta^*}^2 -
  \nrm{\beta\ti - \beta^*}^2 }$ terms will capture the extra regret due
to the randomized rounding.  Summing over $t$, and following
Zinkevich's analysis, we obtain a bound of
\begin{equation*}
  \Regret(T) \le
    \frac{(2R)^2}{2 \eta_T} + \frac{\norm{g_t}{2}^2}{2} \eta_{1:T}  + \rho_{1:T}.
\end{equation*}
It remains to bound $\rho_{1:T}$.
\newcommand{\dd}{d}
\newcommand{\nd}{a}
Letting $\dd_t = \beta\ti - \hb\ti$ and $\nd_t = \dd_t/\eta_t$, we
have
\begin{align*}
\rho_{1:T}
  &= \sum_{t=1}^{T} \frac{1}{2 \eta_t}\big(
      (\hb\ti - \bs)^2 - (\beta\ti - \bs)^2 \big)\\
  &\le \sum_{t=1}^{T}  \frac{1}{2 \eta_t}\left(
      \hb\ti^2  -\beta\ti^2 \right)
       + \bs \nd_{1:T}\\
  &\le \sum_{t=1}^{T}  \frac{1}{2 \eta_t}\left(
      \hb\ti^2  -\beta\ti^2 \right)
       + R \babs{\nd_{1:T}}.
\end{align*}
We bound each of the terms in this last expression in expectation.
First, note $\abs{\dd_t} \leq \eps_t \le  \gam
\eta_t$ by definition of the resolution of the rounding grid, and so
$\abs{\nd_t} \le \gam$. Further $\E[d_t] = 0$ since the rounding is
unbiased.  Letting $W=\abs{\nd_{1:T}}$, by Jensen's inequality we have
$\E[W]^2 \le \E[W^2]$.  Thus, $ \E[\abs{\nd_{1:T}}] \le
\sqrt{\E[(\nd_{1:T})^2]} = \sqrt{\var(\nd_{1:T})},$ where the last
equality follows from the fact $\E[\nd_{1:T}] = 0$.
The $\nd_t$ are not independent given an adaptive
adversary.\footnote{For example the adversary could ensure $\nd_{t+1}
  = 0$ (by playing $g_{t+1} = 0$) iff $\nd_t > 0$.}
Nevertheless, consider any $\nd_s$ and $\nd_t$ with $s < t$.  Since both
have expectation zero, $\cov(\nd_s, \nd_t) = \E[\nd_s \nd_t]$.  By
construction, $\E[\nd_t \mid g_t, \beta_t, \text{hist}_t] = 0$, where
$\text{hist}_t$ is the full history of the game up until round $t$,
which includes $\nd_s$ in particular.  Thus
\[ \cov(\nd_s, \nd_t)
 = \E[\nd_s \nd_t]
 = \E\big[ \E[\nd_s \nd_t \mid g_t, \beta_t, \text{hist}_t]\big] = 0.
\]
For all $t$, $\abs{\nd_t} \leq \gam$ so $\var(\nd_t) \leq \gam^2$, and
$\var(\nd_{1:T}) = \sum_t \var(\nd_t) \leq \gam^2 T$.  Thus,
$\E[\abs{\nd_{1:T}}] \leq \gam \sqrt{T}$.

Next, consider $\E[\hb\ti^2 - \beta\ti^2 \mid \beta\ti]$.  Since
$\E[\hb\ti \mid \beta\ti] = \beta\ti$, for any shift $s \in \R$, we
have
$
\E\big[(\hb\ti - s)^2 - (\beta\ti - s)^2 \mid \beta\ti\big]
 = \E\big[\hb\ti^2 - \beta\ti^2 \mid \beta\ti\big],
$
and so taking $s= \beta\ti$,
\begin{align*}
 \frac{1}{\eta_t}\E\big[\hb\ti^2 - \beta\ti^2 \mid \beta\ti\big]
  &=  \frac{1}{\eta_t}\E\big[(\hb\ti - \beta\ti)^2 \mid \beta\ti\big] \\
  &\leq \frac{\eps_t^2}{\eta_t} \leq \frac{\gam^2 \eta_t^2}{\eta_t}
    = \gam^2 \eta_t.
\end{align*}
Combining this result with $\E[\abs{\nd_{1:T}}] \leq \gam \sqrt{T}$,
we have
\[
 \expct{\rho_{1:T}} \le \gam^2 \eta_{1:T} + \gam R \sqrt{T},
\]
which completes the proof. \qed

\subsection{Approximate Counting}

We first provide high--probability bounds for the approximate counter.

\begin{lemma}\label{lem:approx-counting}
  Fix $T$ and $t \le T$.  Let $\counter\ti$ be the value of the
  counter after $t$ increment operations using the approximate
  counting algorithm described in Section~\ref{sec:approxcounts} with
  base $\base > 1$.  Then, for all $c > 0$, the estimated count
  $\est(\counter\ti)$ satisfies
  \begin{equation}\label{eq:b1}
    \prob{ \est(\counter\ti) < \frac{t}{\base c \log(T)} - 1} \ \le\
    \frac{1}{T^{c-1}}
  \end{equation}
  and
  \begin{equation}\label{eq:b2}
  \prob{ \est(\counter\ti) >
    \frac{et}{\base-1} \base^{\sqrt{2c\log_{\base}(T)}+2} }
    \ \le\  \frac{1}{T^{c}}.
  \end{equation}
\end{lemma}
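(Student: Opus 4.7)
For both bounds I work with the exponential transform $X_t := b^{C_t}$, so that $\est(C_t) = (X_t - b)/(b-1)$. The one-step conditional expectation satisfies $\E[X_{t+1} \mid X_t] = b^{-C_t} \cdot b X_t + (1 - b^{-C_t}) X_t = X_t + (b-1)$, a martingale-like identity that drives both sides.

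For the upper tail \eqref{eq:b2}, I would apply Markov's inequality to a high integer moment $X_{t+1}^s$. The one-step analysis generalizes to $\E[X_{t+1}^s \mid X_t] = X_t^s + (b^s - 1) X_t^{s-1}$, and unrolling this recursion yields, by induction on $s$, a bound of the form $\E[X_{t+1}^s] \le \bigl(\prod_{j=1}^s (b^j - 1)\bigr) (t+1)^s / s! \le b^{s(s+1)/2} (t+1)^s / s!$. Applying Markov with the lemma's threshold $M = et \cdot b^{\sqrt{2c\log_b T}+2}$ (ignoring the lower-order additive $b$) and choosing $s = \lceil \sqrt{2c\log_b T}\rceil$, the $b^{s(s+1)/2}$ factor from the moment is overpowered by $b^{-s(\sqrt{2c\log_b T}+2)}$ from $1/M^s$; the dominant exponent of $b$ collapses to $-s^2/2 \le -c \log_b T$, yielding $T^{-c}$. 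Stirling's estimate $s! \ge (s/e)^s$ cancels the extra $e^s$ in $M^s$, and the $b^{-3s/2}$ slack from the cross-term subsumes the $(1+1/t)^s$ factor.

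For the lower tail \eqref{eq:b1}, I would use a waiting-time representation. Let $T_k$ be the number of increment operations needed to move the counter from $k$ to $k+1$; these are independent and geometric with success probability $p_k = b^{-k}$. Since $C_{t+1} \le K$ iff $S_K := \sum_{k=1}^K T_k > t$, the event $\est(C_{t+1}) < t/(bc\log T) - 1$ is equivalent to $C_{t+1} \le K$ for the largest integer $K$ with $b^K \le 1 + (b-1)t/(bc\log T)$, so that $\E[S_K] \le b^{K+1}/(b-1) \le t/(c\log T)$, far below $t$. I would then apply Chernoff's bound $\prob{S_K > t} \le e^{-\lambda t} \prod_k M_{T_k}(\lambda)$, with $M_{T_k}(\lambda) = p_k e^\lambda/(1-(1-p_k)e^\lambda)$ valid for $\lambda < -\log(1-p_K)$. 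Choosing $\lambda = (1-\delta) p_K$ for a small constant $\delta = \delta(b,c) > 0$, the $k=K$ factor contributes $O(1/\delta)$, the remaining $k < K$ factors satisfy $M_{T_k}(\lambda) \le 1/(1 - (1-\delta) b^{k-K})$ and telescope into a convergent product $\prod_{j \ge 1}(1-b^{-j})^{-1}$ depending only on $b$, and $e^{-\lambda t}$ contributes $\exp(-(1-\delta) t/b^K) \le T^{-(1-\delta) bc/(b-1)}$. Since $bc/(b-1) = (c-1) + (b+c-1)/(b-1)$ strictly exceeds $c-1$ for all $b > 1, c > 0$, $\delta$ can be chosen small enough that the exponent still exceeds $c-1$ with room to absorb the $O(B(b)/\delta)$ constant via a sub-polynomial slack, yielding the stated $T^{-(c-1)}$ bound.

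The main obstacle is not any deep probabilistic idea but rather the precise constant management in both tails. In the upper tail, the integer constraint on $s$ and the prefactors from Stirling must fit inside the ``$+2$'' in the exponent and the leading $e$ in the threshold. In the lower tail, $\delta$ must be tuned so that the $1/\delta$ blow-up in the $T_K$ moment-generating factor is more than compensated by the gain in the exponential rate; once the parametrization is set up, the moment recursion and the geometric MGFs are standard.
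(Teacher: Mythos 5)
Your proposal is correct in outline but takes a genuinely different route from the paper on both tails. For the upper tail \eqref{eq:b2}, the paper does not use moments at all: it lets $j_0$ be the first level with increment probability below $1/(et)$ and bounds $\prob{\counter_{t+1} \ge j_0 + k}$ directly by $\binom{t}{k}\prod_{j=j_0}^{j_0+k-1} b^{-j} \le (te/k)^k\, \incprob(j_0)^k\, b^{-k(k-1)/2} \le k^{-k} b^{-k(k-1)/2}$, then sets $k = \sqrt{2c\log_b T}+1$. For the lower tail \eqref{eq:b1}, the paper bounds the time $X_j$ the counter spends at each level $j$ by $c\, b^{j}\log T$ with failure probability $T^{-c}$ per level, union-bounds over the $T$ levels (this is where the exponent $c-1$ comes from), and on the complement deterministically inverts $t \le \sum_{j\le \counter_t} c\, b^j \log T$. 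Your moment-method upper tail and geometric-waiting-time/Chernoff lower tail are both standard for Morris counters and would yield bounds of the same flavor; the Chernoff route in fact gives an asymptotically stronger exponent $(1-\delta)bc/(b-1)$ in place of $c-1$. What the paper's argument buys is that the constants fall out exactly as stated with no tuning: the combinatorial upper-tail bound lands precisely on $T^{-c}$ at the stated threshold, and the union bound gives exactly $T^{-(c-1)}$ for every $T$.

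Two concrete caveats in your plan. First, the claimed moment bound $\E[X_{t+1}^s] \le \bigl(\prod_{j=1}^s(b^j-1)\bigr)(t+1)^s/s!$ is already false at $s=1$ (with $\counter_1 = 1$ one gets $\E[X_{t+1}] = b + t(b-1) > (b-1)(t+1)$), so the induction needs a slightly larger base such as $(t+2)^s$; this propagates into the final constant accounting and must be checked against the ``$+2$'' in the exponent, especially in the regime where $s$ is small ($2c\log_b T < 1$) or $t < s$, where your slack terms do not obviously dominate. Second, your lower-tail bound carries a multiplicative prefactor $O(B(b)/\delta)$ that you propose to absorb into the gap between $(1-\delta)bc/(b-1)$ and $c-1$; that absorption only works once $T$ exceeds a threshold depending on $b$ and $c$, whereas the lemma as stated (and as used downstream with $c=2.5$ and a union bound over $t \le T$) asserts the clean bound $T^{-(c-1)}$ for all $T$. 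Neither issue is fatal --- the downstream use in Theorem~\ref{thm:ogd-approx-count} is asymptotic --- but as written your argument proves a slightly different lemma than the one stated, and the paper's more elementary counting argument avoids both problems.
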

Both $T$ and $c$ are essentially parameters of the bound; in
the \eqr{b2}, any choices of $T$ and $c$ that keep $T^c$ constant
produce the same bound.  In the first bound, the result is sharpest
when $T = t$, but it will be convenient to set $T$ equal to the total
number of rounds so that we can easily take a union bound (in the proof of Theorem~\ref{thm:ogd-approx-count}).

\begin{proof}[Proof of Lemma~\ref{lem:approx-counting}]
  Fix a sequence of $T$ increments, and let $\counter_i$ denote the
  value of the approximate counter at the start of increment number
  $i$, so $\counter_1 = 1$.  Let $X_j = |\set{i : \counter_i = j}|$, a
  random variable for the number of increments for which the
  counter stayed at $j$.

  We start with the bound of~\eqr{b1}.
  When $\counter = j$, the update probability is $p_j = \incprob(j) =
  \base^{-j}$, so for any $\ell_j$ we have $X_j \ge \ell_j$ with
  probability at most $(1 - p_j)^{\ell_j} \le \exp(-p_j)^{\ell_j} =
  \exp(- p_j \ell_j)$ since $(1 - x) \le \exp(-x)$ for all $x$.
  To make this at most $T^{-c}$ it suffices to take $\ell_j = c (\log
  T) / p_j = c \base^{j} \log T $.  Taking a (rather loose) union
  bound over $j = 1, 2, \ldots, T$, we have
  \[\prob{\exists j,\ X_j > c \base^{j} \log T  } \le 1/T^{c-1}.\]

  For~\eqr{b1}, it suffices to show that if this does not occur, then
  $\est(\counter_t) \ge t / (\base c \log(T)) - 1$.  Note
  $\sum_{j=1}^{\counter_t} X_j \ge t$.  With our supposition that $X_j
  \le c \base^{j} \log T$ for all $j$, this implies $t \le
  \sum_{j=1}^{\counter_t} c \base^{j} \log T = c \base \log T
  \paren{\frac{\base^{\counter_t} - 1}{\base - 1}}$, and thus
  $\counter_t \ge \log_{\base}\paren{\frac{t(\base-1)}{\base c \log T}
    + 1}$.  Since $\est$ is monotonically increasing and $b>1$, simple
  algebra then shows $\est(\counter\ti) \ge \est(\counter_t) \ge t /
  (\base c \log(T)) - 1$.

  Next consider the bound of ~\eqr{b2}.  Let $j_0$ be the minimum
  value such that $\incprob(j_0) \le 1/et$, and fix $k \ge 0$.
  Then $\counter\ti \ge j_0 + k$ implies the counter was incremented
  $k$ times with an increment probability at most $p(j_0)$.  Thus,
  \begin{align*}
   \prob{\counter_t \ge j_0 + k}
     &\le {t \choose k} \prod_{j=j_0}^{j_0+k-1} \incprob(j) \notag \\
     &\le \paren{\frac{te}{k}}^k
         \paren{\prod_{j=0}^{k-1} \incprob(j_0)\base^{-j}} \\
     &= \paren{\frac{te}{k}}^k {\incprob(j_0)}^k \base^{-k(k-1)/2} \\
     &\le k^{-k} \cdot \base^{-k(k-1)/2}
  \end{align*}
  Note that $j_0 \le \ceil{\log_{\base} \paren{et}}$.  Taking $k =
  \sqrt{2c\log_{\base}(T)} + 1$ is sufficient to ensure this
  probability is at most $T^{-c}$, since $k^{-k} \leq 1$ and $k^2 -k
  \geq 2 c \log_{\base} T$.
  Observing that $\est \paren{\ceil{\log_{\base} \paren{et}} +
    \sqrt{2c\log_{\base}(T)} + 1} \le \frac{et}{\base-1}
  \base^{\sqrt{2c\log_{\base}(T)}+2}$ completes the proof.
\end{proof}

\paragraph{Proof of Theorem~\ref{thm:ogd-approx-count}.}
\newcommand{\ka}{k_1}
\newcommand{\kb}{k_2}

We prove the bound for the one-dimensional case; the general bound
then follows by summing over dimensions.  Since we consider a single
dimension, we assume $\abs{g_t} > 0$ on all rounds.  This is without loss
of generality, because we can implicitly skip all rounds with zero
gradients, which means we don't need to make the distinction between
$t$ and $\cnt_{t,i}$.  We abuse notation slightly by defining $\acnt_t
\defeq \est(C\ti) \approx t = \cnt_t$ for the approximate count on
round $t$.
We begin from the bound
\[ \E[\Regret] \le
\frac{(2R)^2}{2 \eta_{T}} + \h (G^2 + \gam^2)\eta_{1:t}
+ \gam R \sqrt{T}.
\]
of Theorem~\ref{thm:ogd}, with learning rates $\eta_t
=\alpha/\sqrt{\acnt_t + 1}$.
Lemma~\ref{lem:approx-counting} with $c=2.5$ then implies
\[
\prob{\acnt_t + 1< \ka t} \le \frac{1}{T^{1.5}} \ \text{ and }\
\prob{\acnt_t > \kb t}  \le \frac{1}{T^{2.5}},
\]
where $\ka = 1/(\base c \log T)$ and $\kb = \frac{e \base^{\sqrt{2 c
      \log_{\base}T} + 2}}{\base - 1}$.
A union bound on $t=1, ..., T$ on the first bound implies with
probability $1 - \frac{1}{\sqrt{T}}$ we have $\forall t,\ \acnt_t +1
\ge \ka t$, so
\begin{equation}  \label{eq:etaA}
  \eta_{1:T}
  = \sum_{t=1}^T \frac{\alpha}{\sqrt{\acnt_t + 1}}
  \le  \frac{1}{\sqrt{\ka}}\sum_{t=1}^T \frac{\alpha}{\sqrt{t}}
  \le \frac{2\alpha \sqrt{T}}{\sqrt{\ka}},
\end{equation}
where we have used the inequality $\sum_{t=1}^T \frac{1}{\sqrt{t}}
\le 2 \sqrt{T}$.  Similarly, the second inequality implies with
probability at least $1 - \frac{1}{T^{2.5}}$,
\begin{equation}\label{eq:etaB}
  \eta_T = \frac{\alpha}{\sqrt{\acnt_T + 1}}
  \ge \frac{\alpha}{\sqrt{\kb T + 1}}.
\end{equation}
Taking a union bound, Eqs.~\eqref{eq:etaA} and \eqref{eq:etaB} hold
with probability at least $1 - 2/\sqrt{T}$, and so at least one fails
with probability at most $2/\sqrt{T}$.  Since $f_t(\beta) -
f_t(\beta') \leq 2GR$ for any $\beta, \beta' \in [-R, R]$ (using the
convexity of $f_t$ and the bound on the gradients $G$), on any run of
the algorithm, regret is bounded by $2RGT$.  Thus, these failed cases
contribute at most $4 RG\sqrt{T}$ to the expected regret bound.

Now suppose Eqs.~\eqref{eq:etaA} and \eqref{eq:etaB} hold.  Choosing
$\alpha = \frac{R}{\sqrt{G^2 + \gam^2}}$ minimizes the dependence on the
other constants, and note for any $\delta > 0$, both
$\frac{1}{\sqrt{\ka}}$ and $ \sqrt{\kb}$ are $o(T^\delta)$.
Thus, when Eqs.~\eqref{eq:etaA} and \eqref{eq:etaB} hold,
\begin{align*}
  \E[\Regret]
  &\le \frac{(2R)^2}{2 \eta_{T}} + \h (G^2 + \gam^2)\eta_{1:t}
  + \gam R \sqrt{T}  \\
  &\le \frac{2R^2\sqrt{\kb T + 1}}{\alpha} + (G^2 + \gam^2)
  \frac{\alpha \sqrt{T}}{\sqrt{\ka}}
  + \gam R \sqrt{T} \\
  &= o\Big(R \sqrt{G^2 + \gamma^2} T^{0.5 + \delta}\Big).
\end{align*}
Adding $4RG\sqrt{T}$ for the case when the high-probability
statements fail still leaves the same bound.  \qed

It follows from the proof that we have the more precise but cumbersome
upper bound on $\E[\Regret]$:
\begin{equation}\label{eq:fullregret}
  \frac{2R^2\sqrt{\kb T+1}}{\alpha} + (G^2 + \gam^2)
  \frac{\alpha \sqrt{T}}{\sqrt{\ka}}
  + \gam R \sqrt{T}
  + 4 R G\sqrt{T}.
\end{equation}

\fullonly{
\subsection{Encoding During Prediction Time}

We use the following well--known inequality, which is a direct
corollary of the Azuma--Hoeffding inequality.  For a proof,
see~\cite{chung06}.
\begin{theorem} \label{thm:azuma} Let $X_1, \ldots, X_{\dim}$ be
  independent random variables such that for each $i$, there is a
   constant $c_i$ such that $|X_i - \expct{X_i}| \le c_i$, always.  Let
   $X = \sum_{i=1}^{\dim} X_i$.  Then $\prob{|X - \expct{X}| \ge t} \le
   2 \exp \set{ -t^2 / 2 \sum_i c_i^2 }$.
 \end{theorem}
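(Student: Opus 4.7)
The plan is to prove the bound by the classical Chernoff method, which reduces everything to controlling a moment generating function. First, center the variables: let $Y_i = X_i - \expct{X_i}$, so each $Y_i$ has mean zero with $|Y_i| \le c_i$, and $X - \expct{X} = \sum_i Y_i$. The $Y_i$ inherit independence from the $X_i$.

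Next, for any $\lambda > 0$, Markov's inequality applied to $\exp(\lambda \sum_i Y_i)$ gives
\[
\prob{\sum_i Y_i \ge t} \;\le\; e^{-\lambda t}\, \expct{\exp\paren{\lambda \sum_i Y_i}} \;=\; e^{-\lambda t} \prod_i \expct{e^{\lambda Y_i}},
\]
where the final equality uses independence. Then bound each MGF factor by Hoeffding's lemma: for any mean-zero $Y$ with $|Y| \le c$, $\expct{e^{\lambda Y}} \le e^{\lambda^2 c^2 / 2}$. This follows from writing $e^{\lambda y} \le \frac{c+y}{2c} e^{\lambda c} + \frac{c-y}{2c} e^{-\lambda c}$ on $[-c, c]$ by convexity of $\exp$, taking expectations (using $\expct{Y} = 0$) to obtain $\cosh(\lambda c)$, and using the Taylor-series comparison $\cosh(z) \le e^{z^2/2}$.

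Substituting yields $\prob{\sum_i Y_i \ge t} \le \exp\paren{-\lambda t + \frac{\lambda^2}{2} \sum_i c_i^2}$. Optimize over $\lambda > 0$ by choosing $\lambda = t / \sum_i c_i^2$, which produces the upper-tail bound $\exp\paren{-t^2/(2\sum_i c_i^2)}$. Repeating the whole argument with $-Y_i$ in place of $Y_i$ gives the matching lower-tail bound, and a union bound supplies the factor of $2$ in the statement.

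The main obstacle is Hoeffding's lemma itself --- the subgaussian MGF bound for bounded centered variables; once that is in hand, everything else is the textbook Chernoff optimization and a symmetrization step. Since the excerpt already cites~\cite{chung06} for a full proof and describes the result as a ``direct corollary'' of Azuma--Hoeffding, a concise sketch along these lines should be sufficient.
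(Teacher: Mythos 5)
Your proof is correct and complete. The paper itself gives no proof of this theorem---it simply notes that it is a direct corollary of Azuma--Hoeffding and cites Chung and Lu for details---so there is nothing to compare against; your argument is the standard Chernoff-plus-Hoeffding's-lemma derivation, and the constants check out (each centered $Y_i$ lives in an interval of length $2c_i$, giving the MGF bound $e^{\lambda^2 c_i^2/2}$ and hence the denominator $2\sum_i c_i^2$ in the exponent after optimizing $\lambda$).
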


 An immediate consequence is the following large deviation bound on
 $\delta = |\beta \cdot x - \hat{\beta} \cdot x|$:
\begin{lemma} \label{lem:delta-distribution} Let $\hat{\beta}$ be a
  model obtained from $\beta$ using unbiased randomized rounding to a
  precision $\epsilon$ grid.  Fix $x$, and let $Z = \hat{\beta} \cdot
  x$ be the random predicted log-odds ratio.  Then
  \[
  \prob{|Z - \beta \cdot x| \ge t} \le 2 \exp \paren{ \frac{-t^2} { 2
    \epsilon^2 \norm{x}{0} } }
  \]
\end{lemma}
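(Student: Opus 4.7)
The plan is to invoke Theorem~\ref{thm:azuma} directly on the decomposition $Z = \hat{\beta} \cdot x = \sum_{i=1}^{\dim} X_i$, where $X_i \defeq \hat{\beta}_i x_i$. The construction preceding the lemma specifies that each coordinate $\hat{\beta}_i$ is rounded independently and in an unbiased way, supplying the two hypotheses Theorem~\ref{thm:azuma} requires: the $X_i$ are mutually independent, and $\expct{X_i} = \beta_i x_i$, so $\expct{X} = \beta \cdot x$. Thus $|Z - \beta \cdot x|$ is precisely the quantity $|X - \expct{X}|$ that Azuma--Hoeffding controls.

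What remains is to produce the pointwise deviation constants $c_i$. For coordinates with $x_i = 0$ we have $X_i \equiv 0$, so $c_i = 0$ works. For coordinates with $x_i = 1$ the variable $X_i = \hat{\beta}_i$ takes one of the two values $\eps \floor{\beta_i/\eps}$ and $\eps \ceil{\beta_i/\eps}$ produced by RandomRound in Algorithm~\ref{alg:ogdone}, both of which lie within $\eps$ of $\beta_i$. Consequently $|X_i - \expct{X_i}| = |\hat{\beta}_i - \beta_i| \le \eps$ always, so $c_i = \eps$ suffices. Summing squared bounds gives $\sum_{i=1}^{\dim} c_i^2 = \eps^2 \cdot |\set{i : x_i = 1}| = \eps^2 \norm{x}{0}$, and plugging this into Theorem~\ref{thm:azuma} yields exactly the claimed tail bound.

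Since the argument is a direct plug-in to Hoeffding's inequality, there is no real obstacle. The only care needed is verifying that the rounding procedure simultaneously delivers the unbiasedness $\expct{\hat{\beta}_i} = \beta_i$ and the sharp pointwise bound $|\hat{\beta}_i - \beta_i| \le \eps$; both are immediate from the definition of RandomRound. The assumption $x \in \set{0,1}^{\dim}$ inherited from Section~\ref{sec:prediction} is what collapses $\sum_i c_i^2$ to $\eps^2 \norm{x}{0}$ rather than $\eps^2 \norm{x}{2}^2$, so the binary-feature setting is essential for the stated form of the bound.
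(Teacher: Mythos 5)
Your proof is correct and follows exactly the route the paper takes: the paper states Lemma~\ref{lem:delta-distribution} as an immediate consequence of Theorem~\ref{thm:azuma} applied to the independent, unbiased per-coordinate roundings, which is precisely the plug-in you carry out (with the details of independence, unbiasedness, and the constants $c_i = \eps$ for $x_i = 1$ made explicit). Nothing is missing.
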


Lemmas~\ref{lem:fixed-error} and~\ref{lem:relative-error} provide
bounds in terms of the quantity $|\beta \cdot x - \hat{\beta} \cdot
x|$.  The former is proved in Section~\ref{sec:prediction}; we now
provide a proof of the latter.

\newcommand{\predictfrommodel}[2]{\ensuremath{p_{#1}\paren{#2}}}
\newcommand{\pfm}[1]{\predictfrommodel{\hat{\beta}}{#1}}
\newcommand{\rmloss}[0]{\hat{\alpha}}
\newcommand{\mloss}[0]{\alpha}

\paragraph{Proof of Lemma~\ref{lem:relative-error}}
\newcommand{\z}{z}
\newcommand{\zh}{\hat{z}}
We claim that the relative error is bounded as
\begin{equation}\label{eq:claim}
\frac{\logloss{x,y;\hat{\beta}} - \logloss{x,y;\beta}}{\logloss{x,y;\beta}}
  \le e^{\delta} - 1,
\end{equation}
or equivalently, that that $\logloss{x,y;\hat{\beta}} \le e^{\delta}
\logloss{x,y; \beta}$, where $\delta \defeq |\beta \cdot x -
\hat{\beta} \cdot x|$ as before.
We will argue the case in which $y = 1$; the $y = 0$ case is
analogous.  Let $z = \beta \cdot x$, and $\hz = \hb \cdot x$; then,
when $y=1$,
\[
 \logloss{x,y, \beta} = \log(1 + \exp(-\z)),
\]
and similarly for $\hb$ and $\hz$.  If $\hz > z$ then
$\logloss{x,y;\hat{\beta}}$ is less than $\logloss{x,y;\beta}$, which
immediately implies the claim.  Thus, we need only consider the case
when $\hz = \z - \delta$.  Then, the claim of \eqr{claim} is
equivalent to
\begin{align*}
\log \paren{1 + \exp\paren{-z + \delta}}
   &\le  \exp(\delta) \log \paren{1 + \exp\paren{-z}},\\
\intertext{or equivalently,}
 1 + \exp\paren{-z + \delta}
   &\le \paren{1 + \exp\paren{-z}}^{\exp(\delta)}.
\end{align*}
Let $w \defeq \exp\paren{\delta}$ and $u \defeq \exp\paren{-z}$.
Then, we can rewrite the last line as $1 + wu \le (1 + u)^w$, which is
true by Bernoulli's inequality, since $u \ge 0$ and $w \ge 1$. \qed

\paragraph{Proof of Theorem~\ref{thm:rounding-error}}
Let $R = \frac{\logloss{x,y;\hat{\beta}} -
  \logloss{x,y;\beta}}{\logloss{x,y;\beta}}$ denote the relative error
due to rounding, and let $R(\delta)$ be the worst case expected
relative error given $\delta = |\hat{\beta}\cdot x - \beta \cdot x|$.
\newcommand{\Y}{\bar{R}}
\newcommand{\y}{r}
Let $\Y \equiv e^\delta - 1$.  Then, by Lemma~\ref{lem:relative-error},
$R(\delta) \le \Y(\delta)$.  It is sufficient to prove a suitable upper bound
on $\expct{\Y}$.  First, for $\y \geq 0$,
\begin{align*}
\prob{\Y \ge \y}
  &= \prob{e^\delta - 1 \ge \y} \\
  &= \prob{\delta \ge \log(\y+1) } \\
  &\le 2 \exp \paren{ \frac{-\log^2 (\y+1)} { 2 \epsilon^2 \norm{x}{0} }}.
   &&  \text{[Lemma~\ref{lem:delta-distribution}]}
\end{align*}
Using this, we bound the expectation of $\Y$ as follows:
\begin{align*}
\E[\Y] &= \int_{\y=0}^\infty \prob{\Y \ge \y} d\y \\
      &\le 2 \int_{\y=0}^\infty
        \exp \paren{ \frac{-\log^2 (\y+1)} { 2 \epsilon^2 \norm{x}{0} }} d\y, \\
        \intertext{and since the function being integrated is
          non-negative on $(-1, \infty)$,}
      &\le 2\int_{\y=-1}^\infty
        \exp \paren{ \frac{-\log^2 (\y+1)} { 2 \epsilon^2 \norm{x}{0} }} d\y \\
      &= 2 \sqrt{2 \pi \norm{x}{0}}
      \exp\paren{\frac{\eps^2 \norm{x}{0}}{2}} \epsilon,
\end{align*}
where the last line follows after straightforward calculus.  A
slightly tighter bound (replacing the leading $2$ with $1 + \Erf(\eps
\sqrt{\norm{x}{0}}/\sqrt{2})$) can be obtained if one does not make the
change in the lower limit of integration.\qed } %

\clearpage
\begin{small}
\bibliography{round-model}
\bibliographystyle{icml2013}
\end{small}

\end{document}